\newtheorem{theorem}{Theorem}
\newtheorem{corollary}{Corollary}
\newtheorem{assumption}{Assumption}
\title{Distributed Cartesian Power Graph Segmentation\\ for Graphon Estimation}
\author{
  Shitong Wei \\
  Department of Statistics\\
  UC Davis\\
  Davis, CA 95618 \\
  \texttt{swei@ucdavis.edu} \\
  %% examples of more authors
  \And
  Oscar Hernan Madrid-Padilla \\
  Department of Statistics\\
  UC Berkeley\\
  Berkeley, CA 94720 \\
  \texttt{omadrid@berkeley.edu} \\
  \And
  James Sharpnack \\
  Department of Statistics\\
  UC Davis\\
  Davis, CA 95618 \\
  \texttt{jsharpna@ucdavis.edu} \\
}
\begin{document}
% \nipsfinalcopy is no longer used

\maketitle

\begin{abstract}
We study an extention of total variation denoising over images to over Cartesian power graphs and its applications to estimating non-parametric network models.
The power graph fused lasso (PGFL) segments a matrix by exploiting a known graphical structure, $G$, over the rows and columns.
Our main results shows that for any connected graph, under subGaussian noise, the PGFL achieves the same mean-square error rate as 2D total variation denoising for signals of bounded variation.
We study the use of the PGFL for denoising an observed network $H$, where we learn the graph $G$ as the $K$-nearest neighborhood graph of an estimated metric over the vertices.
We provide theoretical and empirical results for estimating graphons, a non-parametric exchangeable network model, and compare to the state of the art graphon estimation methods.
\end{abstract}

\section{Introduction}

% Many tasks in network and image analysis involve denoising a matrix based on structural information.
% In image processing, that matrix is comprised of the pixel values and the structure is based on adjacency of the rows and columns.
Total variation (TV) denoising, also known as the fused lasso, is a classical method for image denoising \cite{chambolle1997image} that groups pixels that are adjacent to one another and have similar pixel values, a process known as segmentation.
For a network, $H$, the analogous task is to segment all possible vertex pairs by segmenting the adjacency matrix of the network. 
While it does not make sense to segment based on the ordering of the vertices, as in TV denoising, if we have some other graph structure, $G$, over the vertices of $H$, then there is some hope of segmenting vertex pairs based on proximity in $G$.
This paper studies the natural generalization of TV denoising over images to networks, using a known or learned graph $G$ to provide the structure. 
(Throughout, we will call the response graph, $H$, a network, and the predictor graph, $G$, a graph.)
To this end, we will introduce the power graph fused lasso (PGFL), and discuss learning the graph, $G$, for graphon models, a non-parametric network model \cite{bickel2009nonparametric}.

\subsection{Methodological overview: denoising a network with KNN-PGFL}

{\bf Power graph fused lasso} is one approach to denoising a response matrix $A$ with a known graph $G = (V,E)$, where the $i$th row and column of $A$ correspond to the $i$th vertex in $G$ ($n:= |V|$).
For example, the underlying graph structure may be based on individuals' spatial proximity while the response matrix, $A_{i,j}$ may be binary indicating whether or not the individuals are friends on a social network.
Our approach will partition the set of all possible dyads, $V^{\times 2} := \{(i,j): i, j \in V\}$, based on the friendship status of these pairs of individuals.
Throughout, we will call elements of $V$ vertices, and elements of $V^{\times 2}$ dyads.
We will approach this problem by constructing a graph over the set of all dyads, $V^{\times 2}$, called the Cartesian power graph of degree 2, or the C2-power graph for short.
Thus $A_{i,j}$ is a label for the dyad $(i,j)$ which is a node in the C2-power graph, and we will study the fused lasso over the C2-power graph to denoise $A$.

Specifically, define the {\em Cartesian power graph of degree 2} (C2-power), $G^{\square 2} = (V^{\times 2}, E^{\square 2})$, where two dyads $(i_0,j_0), (i_1,j_1)$ have an edge connecting them if there is an edge between vertices in one coordinate and the vertices are equal in the other.
Specifically, the C2-power graph edge set is
\begin{equation*}
E^{\square 2} := \left\{ ((i_0,j_0), (i_1,j_1)) \in V^{\times 2}: \left( i_0 = i_1 \textrm{ and } (j_0, j_1) \in E \right) \textrm{ or } \left( (i_0, i_1) \in E \textrm{ and } j_0 = j_1 \right) \right\} 
\end{equation*}
This is consistent with the well-known notion of a graph Cartesian product and the C2-power graph is the Cartesian product of $G$ with itself, (commonly denoted $G^{\square 2} = G \square G$).
Throughout, we will let $A_{i,j} \in \mathbb R$ be our response matrix (supervising variable), which may be binary or continuous; for graphon estimation it is an adjacency matrix.
%In the case of the supervising variable being a network $H$, then $A_{i,j}$ is the $i,j$th entry of the adjacency matrix ($A_{i,j} = 1$ if $(i,j) \in E(H)$ and $0$ otherwise).
Throughout, we will consider a directed supervising variable in which $(i,j)$ is an ordered pair and $A$ may be asymmetric, but much of the results and methods can be extended to the case in which $(i,j)$ denotes unordered pairs.
%Let $\mathcal P$ be the set of matrices where each entry $P_{i,j} \in \mathbb R$ for $(i,j) \in V^{\times 2}$ and $P_{i,i} = 0$ for all $i \in V$.  (In the undirected case these are also assumed to be symmetric.)
%In either case, we will consider the following norm for $P,P' \in \mathcal P$,
%$$
%\| P - P' \|_{\square 2}^2 := \frac{1}{|V^{\times 2}|} \sum_{(i,j) \in V^{\times 2}} (P_{i,j} - P'_{i,j})^2, 
%$$
%which will serve as the norm for our error metric.

%Total variation denoising, also known as the fused lasso, is the dominant method for vertex segmentation.

\begin{figure}
\hspace{-20mm}
\begin{center}
  \includegraphics[width=.8\textwidth]{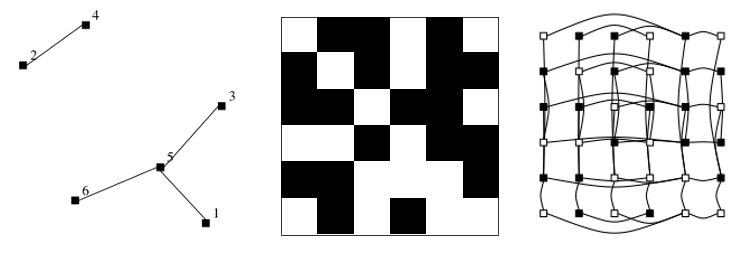}
\end{center}
  \caption{KNN-PGFL method: the 2-NN graph $G_2$ (left) is learned from the adjacency matrix $A$ (middle) of the network $H$, then PGFL is applied to $G_2^{\square 2}$ with the $A_{ij}$ dyadic labels (right).}
  \label{fig:PGFL}
\end{figure}

A natural approach to segmenting a column $A_i$ based on the graph $G$ is to use the fused lasso, also known as TV denoising.
The fused lasso seeks to minimize the data driven loss while also maintaining a small total variation, and over a graph it can be written,
%The fused lasso takes the form of a total-variation penalized square-error minimization
\begin{equation}
\label{eq:FL}
\min_\beta \| y - \beta \|_2^2 + \lambda \| \nabla \beta \|_1,
\end{equation}
where $y = A_i$, $\beta \in \mathbb R^n$, $\lambda > 0$, and $\nabla$ is the $m \times n$ matrix such that $\| \nabla \beta \|_1 = \sum_{(i,j) \in E} |\beta_i - \beta_j|$ ($\nabla$ is known as the edge incidence matrix).
The effect of the TV norm $\| \nabla \beta \|_1$ is to attract nearby vertex values to one another.
Due to the nature of $\ell_1$-type norms, the solutions will tend to be piecewise constant over the graph, where for $\lambda$ large enough, there will be just a few clusters within which the values of $\beta$ are identical.
Because we would like to simultaneously denoise the rows and columns of $A$, we cannot use the fused lasso individually over each row. 
Instead, we will denoise the entire matrix $A$ by applying the fused lasso over the C2-power graph.
{\em Power graph fused lasso} (PGFL) is the solution to the following program,
\begin{equation}
\label{eq:PGFL}
\min_{P\in \mathbb R^{n \times n}} \|A - P\|_{F}^2 + \lambda \left( \| \nabla P \|_1 + \| \nabla P^\top \|_1\right),
\end{equation}
where we abuse notation to allow $\| B \|_1 = \sum_{i,j} |B_{i,j}|$ to be the matrix 1-norm and $\|.\|_{F}$ is the Frobenius 2-norm.
To see that the RHS of \eqref{eq:PGFL} is the TV penalty over the C2-power graph, notice that 
\begin{equation*}
\| \nabla P \|_1 + \| \nabla P^\top \|_1 = \sum_{k=1}^n \| \nabla P_k \|_1 + \| \nabla (P^\top)_k \|_1 = \sum_{(i,j) \in E, k \in V} \left( |P_{k,i} - P_{k,j}| + |P_{i,k} - P_{j,k}|\right),
\end{equation*}
where $P_k$ is the $k$th column of $P$.  You can see a vignette of the C2-power graph in Figure \ref{fig:PGFL}.
%This is precisely the TV over the power graph when we consider the edge set, $E^{\square 2}$.
%When $A$ is the adjacency matrix of a network $H$, we can think of the PGFL as using the graph $G$ to denoise the network $H$. 

% Unfortunately, working directly with the power graph can be a computational non-starter, because the number of vertices scales like $O(n^2)$ and the number of edges scales like $O(nm)$.
% We find that there is a formulation of \eqref{eq:PGFL} with a distributed implementation using the alternating direction method of multipliers.

{\bf Graphons} are network models that provide a non-parametric representation of exchangeable graph models (see \cite{diaconis2007graph} for a thorough introduction).
Let $f: [0,1]^2 \to [0,1]$ be a graphon and let $\{\xi_i\}_{i=1}^n$ be iid draws from a uniform$[0,1]$ distribution.
The graphon model assumes that conditional on the latent variables, $\{\xi_i\}_{i=1}^n$, the adjacency matrix is a Bernoulli ensemble, $A_{i,j} \sim {\rm Bernoulli}(P_{i,j})$ where $P_{i,j} = f(\xi_i, \xi_j)$.
This forms the network, $H$, and the model is exchangeable because the probability distribution is invariant under permutation of the vertices.
%Moreover, the graphon model is invariant under any measure preserving transformation of the latent variables, $\xi_i$.
This highlights the fundamental challenge that is inherent in estimating graphon models: in order to estimate $P$, we must account for the nuisance parameters $\xi_i$.
%the graphon itself is unidentifiable without additional assumptions.
%This is not surprising since it derives directly from the exchangability assumption.
%Despite this seemingly intractable challenge, the matrix $P$ itself can be estimated, and for example, \cite{gao2015rate}, characterized minimax rates for Lipschitz graphons.

We will approach the graphon estimation problem by constructing the graph $G$ from the network $H$ and applying the PGFL to get an estimated probability matrix, $\hat P$.
Recently, \cite{zhang2015estimating} proposed to estimate a metric between vertices $i,j$ based on the adjacency matrix, $A_{i,j}$, and then apply a neighborhood smoother on each column, $A_i$, separately.
The metric that they construct is 
\begin{equation}
\label{eq:dinfty}
\hat d_\infty^2 (i,j) := \max_{k \ne i,j} |(A_i - A_j)^\top A_k| / n,
\end{equation} 
which they show will approximately bound the desired but unknown metric, \smash{$d_2(i,j) := \| f(\xi_i,.) - f(\xi_j,.) \|_{L_2}$} under Lipschitz assumptions. 
We will introduce a similar metric, $\hat d_1$, and use it to form the K-nearest neighbor graph, $G_K$, between the vertices of $H$.
The learned graph $G_K$ is our proxy for the unknown latent parameters $\xi_i$, and the KNN-PGFL is the application of the PGFL, \eqref{eq:PGFL}, to $A$ using the C2-power graph, $G_K^{\square 2}$. 
%In this way, our methodology can behave like image segmentation methods and group dyads, and not just group vertices, but will not rely on an ad-hoc ordering such as sorting by degrees.

%This methodology assumes that the metric $d_2$ is approximately topologically equivalent to the euclidean metric over $[0,1]$, which is much more reasonable than the monotonic degree assumption of SAS. 
%The neighborhood smoother will not produce a graphon with large piecewise constant regions, so it does not segment the dyads unlike SAS and SBM.

% Suppose that we have a known graph $G$ that shares a vertex set with the observed graph $H$, then we can use PGFL to estimate the matrix $P$.
% We can roughly think of using $G$ to provide structure for estimating $H$ as regressing $H$ onto $G$.
% For most datasets we do not have an underlying graph structure, but instead, we can only infer the graph $G$ from the matrix $A$ itself.
% To this end, we will use a modification of the metric $\hat d_\infty$ to form a K-nearest neighbors graph over the vertices and then apply PGFL.

\subsection{Related Work and Contributions}

Graph signal processing refers to methods that denoise, localize, detect, and predict signals over graphs.
For example, each vertex corresponds to a low-powered sensor, and we would like to denoise sensor measurements, and we use the graph structure is based on communication between the sensors or spatial proximity.
The driving assumption is that there is some underlying signal that in some way `respects' the graph topology, and specifies the distribution of the observations.
Many of the tools in signal processing and supervised learning can be extended to the graph case, such as Fourier analysis \cite{sandryhaila2013discrete,hu2015multiresolution}, wavelets \cite{crovella2003graph, sharpnack2013detecting, irion2014generalized}, graph kernels \cite{smola2003kernels}, and convolutional networks \cite{kipf2016semi, henaff2015deep}. 
%One such task is vertex segmentation, which extends image segmentation to the general graph setting.
Graph structure has previously been used in matrix completion and network denoising problems (see for example, \cite{cai2011graph, gu2010collaborative, liu2015bipartite, brunner2012pairwise}), but these methods require some predetermined graph structure, such as knowledge graphs, so are not well suited to estimating graphons, and they do not perform segmentation, which is the focus of this work.

There is an extensive body of literature on solving the fused lasso, \eqref{eq:FL}.
Algorithms for solving the fused lasso can be divided into two categories: solvers for a fixed $\lambda$, and path algorithms that find the solution for every $\lambda$ within a range.
The fused lasso for a fixed $\lambda$ has a quadratic program dual form, and some popular algorithms for this are the projected Newton algorithm of \cite{bertsekas1982projected, barbero2011fast}, first-order primal-dual algorithm \cite{chambolle2011first}, and split-Bregman iteration \cite{goldstein2009split}.
Some path algorithms include the generalized lasso path algorithm of \cite{arnold2016efficient}, and a max-flow version for the fused lasso in \cite{hoefling2010path}.
If applied directly to C2-power graphs, these methods would have computational and memory complexity that scale with the number of dyads, $n \choose 2$.

%\begin{quote}
{\bf Contribution 1.} We provide a distributed implementation of the power graph fused lasso (PGFL), \eqref{eq:PGFL}, based on a novel formulation using the alternating direction method of multipliers.
%\end{quote}

Recent theoretical studies have examined statistical rate guarantees for the fused lasso over graphs, \eqref{eq:FL}.
In these works, it is assumed that the true signal is either of bounded variation $\| \nabla \beta \|_1 \le C$ or is piecewise constant \smash{$\| \nabla \beta \|_0 = |\{(i,j) \in E : \beta_i \ne \beta_j \}|\le C$}, for some constant $C > 0$ (although all discrete signals are technically piecewise constant, we refer to signals with bounded number of changepoints as piecewise constant signals).
\cite{sharpnack2012sparsistency} provided conditions under which one could exactly localize changepoints, edges across which the underlying signal ($\beta$) changes, under the piecewise constant assumption.
These conditions were too strict to be realistic and it was discovered that for many graphs, the mean-square error (MSE) of the fused lasso could diminish even though the changepoints are not precisely recovered.
For 1D chain graphs of length $n$ the mean square error (MSE) was shown to be diminishing like $n^{-2/3}$ for functions of bounded variation \cite{wang2016trend}, and $n^{-1} {\rm plog~} n$ for piecewise constant functions under mild conditions, \cite{lin2017sharp, guntuboyina2017spatial} (throughout plog will refer a poly-logarithmic term).
\cite{padilla2016dfs} demonstrated that the MSE scales like $n^{-2/3}$ for all connected graphs, and not just the 1D chain graph, for functions of bounded variation.
For the 2D grid graph of size $n \times n$, \cite{hutter2016optimal} demonstrated that the MSE diminishes like $n^{-2} {\rm plog~} n$ for both signals of bounded variation and piecewise constant signals. 
The 2D grid graph is the C2-power graph of the 1D chain graph, and so it is reasonable to hope that the 2D grid graph actually has the slowest convergence of any C2-power graph.

%\begin{quote}
{\bf Contribution 2.} We prove that for any connected C2-power graph, $G^{\square 2}$, the mean square error of the PGFL diminishes like $n^{-2} {\rm plog~}n$ when the signal $\mathbb E A$ is of bounded variation and $A$ is subGaussian.
%\end{quote}

We next turn our attention to graphon estimation using the PGFL on a learned graph.
The statistical limits of graphon estimation have been well characterized for smooth graphons, and it was found that computationally intractable profile likelihood maximization is minimax optimal for H\"older graphons, \cite{wolfe2013nonparametric, gao2015rate}.
One tractable approach to graphon estimation is to order the vertices according to some graph statistics, such as the degrees of $H$, and then treat the resulting re-ordered matrix $(A_{\pi(i),\pi(j)})_{i,j}$ as an image and applying image segmentation tools ($\pi$ is the permutation associated with this sorting).
This methodology is called sorting and smoothing (SAS), and in \cite{chan2014consistent} they use TV denoising to perform the image segmentation.
The implicit assumption is that the degree is a decent proxy for the latent variable, $\xi_i$, which does not hold for most graphons.

Another related approach to segment the dyads is to group the vertices via a community detection method.
The stochastic block model is a special instance of the graphon model which assumes that there are latent communities for the vertices and the probability of attachment between two vertices is a function only of the communities to which the vertices belong. 
This can be thought of as segmenting the dyads by taking the Cartesian product of the vertex communities, but this type of segmentation is restrictive because of this specialized structure.
%\cite{airoldi2013stochastic} estimated the graphon by first grouping the vertices and then estimating the stochastic block model for these communities.
Heuristic or greedy methods for fitting the SBM for graphon estimation have been proposed in \cite{airoldi2013stochastic, cai2014iterative}, but little is known about the statistical performance and whether these can achieve minimax performance.
In another approach, \cite{chatterjee2015matrix} proposed a spectral method that thresholds singular values and provided some MSE consistency guarantees.
%The aforementioned smoothing methods of \cite{chan2014consistent, zhang2015estimating} are computationally tractable, do not fit a SBM, and do not rely on the spectrum of $A$.
Currently, the best rate guarantee for a computationally tractable estimator of Lipschitz graphons is achieved by the aforementioned neighborhood smoothing method of \cite{zhang2015estimating}, and the MSE scales like $n^{-1/2} {\rm plog~}n$, which is significantly worse than the minimax rate of $n^{-1} {\rm plog~}n$.

%Many SBM algorithms are spectral and rely on a low rank reconstruction of the adjacency matrix, $A$, but this can have unreliable performance because the underlying graphon will not be low rank in general.

{\bf Contribution 3.} We propose the K-nearest neighbors power graph fused lasso (KNN-PGFL) for graphon estimation, compare its empirical performance to other graphon estimators, and provide theoretical guarantees under a bounded variation assumption on the graphon and additional conditions.

\section{Method}

\subsection{Distributed power graph fused lasso}

% Throughout this subsection we will describe a distributed method for applying the PGFL for denoising a matrix $A$ given an arbitrary graph $G$.
% Because the PGFL, defined in \eqref{eq:PGFL} is the fused lasso over the power graph, we could try to construct the power graph and solve the program directly.

%Each of these methods can be applied directly to the PGFL, but because the number of dyads, the vertices of $G^{\square 2}$ scales like $n^2$, all of these methods are computationally intractable.
%Each of them requires iteratively solving a linear systems in the number of vertices, which will require at least $O(n^2)$ time.
In this section, we provide a distributed method for solving PGFL, \eqref{eq:PGFL}, by iterating parallel row-wise and column-wise operations. 
Our method uses the alternating direction method of multipliers to separate the two terms of the TV penalty on the C2-power graph.
If we make the substitution $Q := P^\top$ then we can reformulate \eqref{eq:PGFL} as 
\begin{align*}
  \min_{P, Q \in \mathbb R^{n \times n}} \frac 12 \| A - P \|_F^2 + \frac 12 \| A^\top - Q \|_F^2 + \lambda \| \nabla P \|_1 + \lambda \| \nabla Q \|_1 ~ {\rm s.t.} ~ P &= Q^T.
\end{align*}
The augmented Lagrangian, with multiplier $U \in \mathbb R^{n \times n}$, for this primal problem is
\begin{align*}
%\label{eq:knnFLLag}
\frac 12 \| A - P \|_F^2 + \frac 12 \| A^\top - Q \|_F^2 + \lambda \| \nabla P \|_1 + \lambda \| \nabla Q \|_1 + \langle U,P-Q^T \rangle+\frac{\eta}{2}||P-Q^T||_F^2,
\end{align*}
where $\langle . , . \rangle$ is the trace inner product.
When $U, Q$ is fixed, the minimization wrt $P$ takes the form of the separable minimization,  
\begin{equation*}
\min_{P \in \mathbb R^{n \times n}} \frac{1 + \eta}{2}\| P - \tilde A \|_F^2 + \lambda \| \nabla P \|_1 = \sum_{i=1}^n \min_{P_i \in \mathbb R^n} \frac{1 + \eta}{2} \| P_i - \tilde A_i \|_2^2 + \lambda \|\nabla P_i \|_1,
\end{equation*}
for some matrix $\tilde A$ (and vice versa for $U,P$ fixed).
The inner minimization of the RHS is the fused lasso on the graph $G$ (the prox operator for the graph total variation), which we can take to be an algorithmic primitive.
Let \smash{$\textnormal{prox}_{\lambda,G}(y) := \arg\min_\beta \| y - \beta \|_2^2 + \lambda \| \nabla \beta \|_1$} be the proximal operator, then we can summarize the resulting ADMM algorithm in Algorithm \ref{alg:PGFL}.

\begin{algorithm}
\label{alg:PGFL}
\caption{Distributed power graph fused lasso (PGFL)}
\hspace*{\algorithmicindent} \textbf{Input:} Graph $G$, response matrix $A$, tuning parameter $\lambda > 0$\\
\hspace*{\algorithmicindent} \textbf{Output:} Denoised matrix $\hat P$
\begin{algorithmic}[1]
  \State Initialize $P = \mathbf{0}$, $Q = A^T$, $U = \mathbf{0}$,
  \While{stopping criteria not met}
  \State parallel for $P_i \gets \textnormal{prox}_{\frac{2\lambda}{1+\eta},G}(\frac{A_i-U_i+\eta (Q^T)_i}{1+\eta})$,
  \State parallel for $Q_i \gets \textnormal{prox}_{\frac{2\lambda}{1+\eta},G}(\frac{(A^T)_i+(U^T)_i+\eta (P^T)_i}{1+\eta})$,
  \State $U \leftarrow U +  \eta(P-Q^T)$.
  \EndWhile
  \State $\hat P \gets P$
\end{algorithmic}
\end{algorithm}

We use projected Newton iteration to compute the proximal operator (see the Appendix for the exact specification), which requires a Laplacian system solver.
%Because the graph $G$ is shared between all of the iterates, preconditioners for the Laplacian system can be precomputed and shared for all proximal operators.
Projected Newton maintains an active set of edges $E$, such that if $(j,k)$ is in the active set then $P_{i,j} = P_{i,k}$ (when prox is applied to the $i$th row/column), and similarly for $Q_i$ with a different active set.  
Hence, the denoised matrix, $\hat P$, will have regions of constant value that are connected by elements of the active sets, and in this way it will segment the matrix $A$. 
This methodology works for any graph $G$ and response matrix $A$, in the next section we outline the application of the PGFL for graphon estimation.

\subsection{Fused graphon estimation}

Suppose that the response matrix $A$ is the adjacency matrix for an observed network $H$, and we are tasked with estimating the underlying probability matrix $P_0 = \mathbb E A$.
%Applying PGFL to graphon estimation is complicated by the fact that the underlying graph $G$ will have to be extracted from the graph $H$.
A natural approach is to begin with a metric that is extracted from $H$, then forming the K-nearest neighbor (KNN) graph for this metric.
The idea is that if the underlying graphon is of sufficiently controlled variation with respect to the metric, then the variation between KNNs will be likewise controlled.

Constructing a meaningful metric over the vertices of the graphon is challenging because there are only a few statistics of the graphon that can be reliably estimated.
Particularly, \cite{zhang2015estimating} observed that the inner product $\int f(\xi_i,.) \cdot f(\xi_j,.)$ has the unbiased estimator $A_i^\top A_j / n$ when $i \ne j$, but $A_i^\top A_i/n$  (the degree of the vertex $i$ divided by $n$) has expectation $\int f(\xi_i,.)$.
So, estimating the $L_2$ norm between the graphon cross-sections (and most other common norms), is exceedingly difficult.
\cite{zhang2015estimating} approached this problem by approximating the $L_2$ metric with $\hat d_\infty$, \eqref{eq:dinfty}.
We propose the use of a similar metric, which we empirically observe to be a more stable variant, 
\begin{equation*}
\hat d_1^2(i,j) := \frac{1}{n(n-2)} \sum_{k \ne i,j} |(A_i - A_j)^\top A_k|.
\end{equation*}
We then generate the KNN graph, $G_K$, which is defined to be symmetric and undirected, by connecting edges if either of the incident vertices is a K-nearest neighbor of the other.
By applying the PGFL to $G_K$ and $A$, then we obtain a $\hat P$ which will be piecewise constant.
Finally, we obtain a partition of the dyads, $V^{\times 2}$, which are those regions of the C2-power graph, $G_K^{\square 2}$, over which $\hat P$ is constant.

\begin{algorithm}
\label{alg:GS}
\caption{K-nearest neighbors power graph fused lasso (KNN-PGFL)}
\hspace*{\algorithmicindent} \textbf{Input:} network $H$ with adjacency matrix $A$, tuning parameter $\lambda > 0$\\
\hspace*{\algorithmicindent} \textbf{Output:} partition of $V^{\times 2}$, $\mathcal S$, and estimated probabilities $\hat P$
\begin{algorithmic}[1] 
  \State Calculate the $\hat{d}_1$ distance matrix $\hat{D}_1 = (\hat{d}_1(i,j))_{i,j}$;
  \State Generate the undirected KNN graph $G_K$: $(i,j) \in E$ if $i$ is a KNN of $j$ or vice versa;
  \State Calculate $\hat P$ with Distributed PGFL on $G_K, A, \lambda$.
  \State Augment $G_K^{\square 2}$ by removing non-active edges $((i_0,j_0),(i_1,j_1)) \in E^{\square 2}$ such that $\hat P_{i_0,j_0} \ne \hat P_{i_1,j_1}$.
  \State Return the connected components of the augmented C2-power graph, $\mathcal S$, and $\hat P$.
\end{algorithmic}
\end{algorithm}

\section{Theory}

We will begin our theoretical analysis with a mean-square error guarantee for the PGFL on any graph $G$.
This will give us corollaries for graphon estimation according to Algorithm \ref{alg:GS}.

\subsection{Guarantees for general power graphs}
Recall that \cite{hutter2016optimal} demonstrated that the MSE of total variation denoising of a 2D image scales like $n^{-2} {\rm plog~} n$ under a bounded variation assumption.
2D total variation denoising is the PGFL when $G$ is the 1D chain graph, and for our main result, we find that this rate guarantee holds for any connected graph, $G$.
To prove this result we use a proof technique pioneered in \cite{padilla2016dfs}, where the depth-first search algorithm is used to reorder the vertices in a way that approximately preserves total variation.
We modify this technique to work for C2-power graphs, and arrive at our desired conclusion.

\begin{theorem}[PGFL for general $G$]
	\label{thm:general}
	Suppose that $A$ has expectation $P_0$, and that $G$ has $q$ connected components.
	Let $\hat P$ be the solution to \eqref{eq:PGFL}, let $R = A - P_0$, and assume each entry, $R_{i,j}$, is an independent and subGaussian($\sigma^2)$.
	Then for some choice of $\lambda \asymp \log n \sqrt{\log (nq)}$ the MSE decays,
	\begin{equation*}
	\frac{1}{n^2}\| \hat P - P_0 \|_F^2 = O_{\mathbb{P}} \left(  \frac{q^2 \log q}{n^2}  + \frac{q^2   \| \hat P - P_0\|_{\infty}  \log n}{n}   +   \frac{\log n}{n^2}(\| \nabla P_0 \|_1 + \| \nabla P_0^\top \|_1) \right).
	% \left( \frac{\log n \sqrt{\log q}}{n^2} + \frac{q}{n^2} \right)(\| \nabla P_0 \|_1 + \| \nabla P_0^\top \|_1) \right).
	\end{equation*}
\end{theorem}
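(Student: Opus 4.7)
The plan is to adapt the depth-first-search (DFS) reordering argument of \cite{padilla2016dfs} to Cartesian power graphs and then transport the 2D total-variation denoising bound of \cite{hutter2016optimal} from the grid to $G^{\square 2}$. DFS on $G$ (run separately on each connected component) yields a permutation $\pi$ of $V$ such that for any signal $\beta$ on $V$, the chain-graph total variation of $\beta\circ\pi$ satisfies $\|\nabla_{\rm chain}(\beta\circ\pi)\|_1 \le 2\|\nabla_G \beta\|_1 + O(q\|\beta\|_\infty)$, the additive term absorbing the jumps incurred when DFS transitions between components. Applying this simultaneously to rows and columns, the product permutation $(\pi,\pi)$ embeds any matrix signal on $V^{\times 2}$ into the $n\times n$ 2D grid with 2D TV bounded by $2(\|\nabla P\|_1 + \|\nabla P^\top\|_1) + O(nq\|P\|_\infty)$.

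I would then carry out the following steps. First, derive the standard basic inequality from optimality of $\hat P$ in \eqref{eq:PGFL}: writing $D$ for the edge-incidence operator on $G^{\square 2}$ so that $\|DP\|_1 = \|\nabla P\|_1 + \|\nabla P^\top\|_1$, optimality of $\hat P$ gives $\tfrac12\|\hat P - P_0\|_F^2 \le \langle R, \hat P - P_0\rangle + \lambda(\|DP_0\|_1 - \|D\hat P\|_1)$. Second, decompose $\hat P - P_0$ into a null-space component (in the $q^2$-dimensional subspace spanned by indicators of the components $V_i\times V_j$ of $G^{\square 2}$) and a range component. SubGaussian concentration of $\langle R,\cdot\rangle$ over this fixed $q^2$-dimensional subspace contributes $O_{\mathbb P}(\sigma^2 q^2\log q)$, which after division by $n^2$ yields the first term in the bound. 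Third, for the range part use the dual-norm inequality $|\langle R,\Delta\rangle|\le \|(D^+)^\top R\|_\infty\|D\Delta\|_1$; choosing $\lambda$ to dominate $\|(D^+)^\top R\|_\infty$ kills the $\lambda\|D\hat P\|_1$ term, leaving only $\lambda\|DP_0\|_1 \asymp \log n(\|\nabla P_0\|_1 + \|\nabla P_0^\top\|_1)$, matching the third term. Fourth, bound $\|(D^+)^\top R\|_\infty$ by $O(\sigma\log n\sqrt{\log(nq)})$ with high probability: this is where Hutter-Rigollet's inverse-scaling-factor estimate for the 2D grid is transported to $G^{\square 2}$ via the DFS permutation $(\pi,\pi)$. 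Finally, the extra variation picked up by DFS in crossing inter-component boundaries is charged to the middle $q^2\|\hat P - P_0\|_\infty \log n/n$ term.

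The main technical obstacle is the transfer in Step~4: the 2D grid in the $(\pi,\pi)$-reordered coordinates is \emph{not} literally a subgraph of $G^{\square 2}$, because consecutive DFS vertices are typically non-adjacent in $G$ whenever DFS backtracks. Padilla et al.~resolve the 1D analog by charging each backtracking step to a corresponding DFS-tree edge, which is where the factor of $2$ in the TV comparison originates. The 2D analog requires showing that $(D^+)^\top R$ on $G^{\square 2}$, when expressed in the reordered basis, is controlled in sup-norm by the 2D-grid pseudoinverse applied to the reordered noise $\tilde R$, plus a boundary correction driven by component-boundary jumps and scaling with $q$ and $\|\hat P-P_0\|_\infty$. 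Handling the $q>1$ case block-by-block on the $V_i\times V_j$ tiles, and verifying that the boundary correction enters the final bound with exactly $q^2/n$ rather than some worse power, is the delicate part of the argument. A secondary bookkeeping challenge is to make sure that, when the grid noise $\tilde R$ is re-expressed through the DFS tree, each original entry of $R$ is charged a bounded number of times so that the subGaussian maximal inequality still yields only a $\sqrt{\log(nq)}$ factor.
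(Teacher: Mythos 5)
Your architecture matches the paper's proof almost step for step: the basic inequality from optimality, the split of $\langle R,\hat P-P_0\rangle$ into a $q^2$-dimensional null-space part (yielding the $q^2\log q$ term) and a range part, the per-component DFS reordering that embeds the problem into an $n\times n$ 2D grid, the use of $\max_j\|(\nabla_{2D}^+)_{\cdot,j}\|_2=O(\sqrt{\log n})$ from \cite{hutter2016optimal}, and the boundary jumps charged to $\|\hat P-P_0\|_\infty$. The one place your plan diverges---and where it would get stuck---is Step 4. You propose to apply the dual-norm (H\"older) inequality with the pseudoinverse of the incidence operator $D$ of $G^{\square 2}$ itself, i.e.\ to bound $\|(D^+)^\top R\|_\infty$, and to obtain that bound by ``transporting'' the grid estimate through the DFS permutation. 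That transport is not available: the DFS lemma of \cite{padilla2016dfs} is a primal, $\ell_1$-level comparison of total variations ($\|\nabla_{1D}\beta\|_1\le 2\|\nabla_G\beta\|_1$ on each component) and says nothing about the $\ell_2$ column norms of $D^+$ relative to those of $\nabla_{2D}^+$. No polylogarithmic bound on the inverse scaling factor $\max_e\|(D^+)_{\cdot,e}\|_2$ of a general power graph is known; for general graphs this quantity is precisely the obstruction that the DFS device was invented to circumvent. So Step 4 as stated has no proof, and the ``main technical obstacle'' you identify is one you created by routing the argument through $D^+$.

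The fix, which is what the paper does, is to never introduce $D^+$. On each tile $V_{l'}\times V_l$, decompose $\langle R,\Delta\rangle$ as the mean part plus $\langle(\nabla_{2D}^+)^\top \mathrm{vec}(R),\nabla_{2D}\mathrm{vec}(\Delta)\rangle$ using the \emph{grid} incidence operator in the DFS-reordered coordinates, apply H\"older there (so only $\|(\nabla_{2D}^+)^\top \mathrm{vec}(R)\|_\infty$ is ever needed, and the grid estimate applies directly with a union bound over the $n^2$ coordinates and $q^2$ tiles, producing the $\sqrt{\log n\,\log(nq/u)}$ factor), and then use the DFS lemma in the primal to bound $\|\nabla_{2D}\mathrm{vec}(\Delta)\|_1$ by twice the power-graph TV of $\Delta$ restricted to the tile plus an $O(n\|\Delta\|_\infty)$ tile-boundary term, with $\Delta=\hat P-P_0$. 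The triangle inequality splits the TV of $\Delta$ into a $P_0$ piece (absorbed into the third term of the bound) and a $\hat P$ piece that is cancelled by the $-\lambda\|D\hat P\|_1$ contribution of the penalty in the basic inequality once $\lambda$ dominates the H\"older factor. Summing the $O(n\|\hat P-P_0\|_\infty)$ boundary contribution over all $q^2$ tiles is what produces the middle term with exactly $q^2/n$ after normalization---so the power of $q$ comes from the per-tile accounting, not from a single global DFS with $q-1$ inter-component transitions as in your sketch. With this reorganization of the dual-norm step, your proposal becomes the paper's proof.
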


We note that if the graph $G$  is connected, then $q=1$, and from the proof we see that  the term  involving $\|\hat P - P_0\|_{\infty}$ disappears  from the upper bound.

% Recall the graphon estimation setting, in which a network $H$ is observed from a graphon model, and we would like to estimate the probabilities of attachment, $P$.
% The neighborhood smoothing procedure of \cite{zhang2015estimating}, relies on the quality of the empirical distance metric $\hat d_\infty$ as a proxy for the true distance $d_2$ between the vertices of $H$.
% %Then they assume that the true metric $d_2$ is nearly topologically equivalent to the distance between the latent parameters $\xi_i$.
% The natural idea behind the method is that if we use a kernel smoother or nearest neighbor based method using this empirical metric, then it will in turn be similar to a smoother in the unknown metric $d_2$.
% By assuming that the graphon $f$ is piecewise Lipschitz, they prove a mean square error guarantee, but unfortunately it differs from the known minimax rate.
% It seems that this is due to fundamental limitations of this methodology, namely that the graphon estimate will be only as good as the quality of the empirical metric, $\hat d_\infty$.

\subsection{Graphon estimation}
Algorithm \ref{alg:GS}, is predicated on the notion that if you consider the K-nearest neighbors of vertex $i$ in $\hat d_1$, then these will have similar graphon cross-sections, namely, $f(\xi_i,.) \approx f(\xi_j,.)$ for neighbor $j$.
When $f$ is sufficiently smooth, then $\xi_i \approx \xi_j$ will imply that the corresponding graphon cross-sections are similar.
In this work, the notion of smoothness that we will assume is that the cross-sections are of bounded variation.

\begin{assumption}
\label{as:bv}
There exists a constant $B>0$, such that for  any  $v \in [0,1]$   and  $0  \, \leq \,u_1  \,\leq  \, u_2\,\leq \,\ldots\,\leq u_s \,\leq \, 1$  for  $s \in  \mathbb{N}$  we have that the graphon $f$ satisfies,
\begin{equation*}
  \displaystyle \sum_{l=1}^{s-1} \,  \left\vert  f(u_s,v)  \,-\,  f(u_{s+1},v)  \right\vert  \,\,\leq\,\,B, \quad   \displaystyle \sum_{l=1}^{s-1} \,  \left\vert  f(v,u_s)  \,-\,  f(v,u_{s+1})  \right\vert  \,\,\leq\,\,B.
\end{equation*}
%In words, the total variation of any graphon cross-section is bounded by a constant $B>0$.
\end{assumption}

Our proposed Algorithm \ref{alg:GS} is a departure from the neighborhood smoother of \cite{zhang2015estimating} in two ways: we use the metric, $\hat d_1$, instead of $\hat d_\infty$, and the PGFL provides a segmentation of the entire adjacency matrix, $A$ (as opposed to smoothing in a row-wise fashion).
We find in this section that the performance of the KNN-PGFL is very dependent on the quality of our underlying metric $\hat d_1$, which we find to be more stable than $\hat d_\infty$.
This is consistent with the theoretical results in \cite{zhang2015estimating}.
Roughly, speaking, the statistical rate bottleneck in their analysis lies with variability of their metric $\hat d_\infty$.
One can imagine that because $\hat d_\infty^2(i,j)$ is based on the average of $n$ independent random variables, it will have a standard error of around $n^{-1/2}$.
Notably this error is additive, meaning that even when $\xi_i \approx \xi_j$, we may have that $\hat d_\infty^2(i,j)$ is on the order of $n^{-1/2}$.
This measurement error means that the resolution for estimating a smooth graphon using $\hat d_\infty$ will be at the scale of $n^{-1/2}$, which is significantly different from the optimal resolution of $n^{-1}$---we can smooth at bandwidths that are on this order and obtain optimal graphon estimators.
This additive error term, $\Delta_n$, is made precise in the following assumptions which apply to any choice of metric.

\begin{assumption}
\label{as:hat_lower}
The distance $\hat d$ is lower Lipschitz wrt $\xi$ with constant $L_1>0$, and additive error $\Delta_n >0$, if for $(i,j) \in V^{\times 2}$,
\begin{equation*}
L_1 |\xi_i - \xi_j| - \Delta_n \le \hat d^2(i,j).
\end{equation*}
\end{assumption}

\begin{assumption}
\label{as:hat_upper}
The distance $\hat d$ is piecewise Lipschitz wrt $\xi$ with constant $L_2>0$, and additive error $\Delta_n >0$, if the following holds.
There exists a constant $L_2 > 0$  and  a partition  $ \mathcal{A}\,:=\,\{ a_1,\ldots,a_{m -1}  \}$  and sets  $A_1  \,=\,[a_0,a_1)$ with $a_0 \,=\,0$, and $A_l \,\,=\,\, [a_{l-1},a_{l})  $ for $l \in \{2,\ldots,m-1\}$, and   $A_{m} \,=\,[a_{m-1},a_{m}]$   with $a_{m} \,=\,1$, such that for $(i,j) \in V^{\times 2}$,
\begin{equation*}
\xi_i,\xi_j \in A_l,  \,\,\,l \in \{1,\ldots,m-1\},\,\,\,\,\,\,\,\, \text{implies}\,\,\,\,\,\,\,\,   \hat{d}^2(i,j) \,\leq\,L_2\, \vert \xi_i\,-\,\xi_j \vert \,+\,\Delta_n.
\end{equation*}
\end{assumption}

Assumption \ref{as:hat_lower} is a statement that the cross-sections do not repeat themselves in the sense that if $\xi_j$ is far from $\xi_i$ then the the corresponding cross-sections are sufficiently different in the metric.
Assumption \ref{as:hat_upper} will hold for $\hat d_1$ if we assume that the graphon is piecewise Lipschitz where $\Delta_n \asymp n^{-1/2} {\rm plog~}n$ (see the Appendix of \cite{zhang2015estimating} for a similar derivation).
If we have a metric and graphon that satisfies these assumptions, then we can obtain an MSE rate bound that is dependent on $\Delta_n$.

\begin{corollary}
	\label{cor:graphon}
	Suppose that $A \in \mathbb R^{n \times n}$ is drawn from a graphon model with graphon $f$ that satisfies Assumption \ref{as:bv}, and let $P_{0,i,j} := f(\xi_i,\xi_j)$ be the conditional edge probability. 
	Let $\hat P$ be the output of Algorithm \ref{alg:GS} applied to $A$ with a metric $\hat d$ that satisfies Assumptions \ref{as:hat_lower}, \ref{as:hat_upper}, and for $K/ \log n \rightarrow \infty$ and $K/n \rightarrow 0$.
	Suppose that the KNN graph, $G_K$, has $q$ connected components, then there is a choice of $\lambda$ such that 
	\begin{equation*}
	\frac{1}{n^2} \| \hat P - P_0 \|_F^2 = O_P \left( \frac{q^2 \log n}{n}    +   \frac{(K^2 + n K \Delta_n)\cdot\log n }{n}\right).
	\end{equation*}
\end{corollary}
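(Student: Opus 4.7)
The plan is to invoke Theorem~\ref{thm:general} with $G=G_K$ and control the two problem-dependent terms on the right-hand side: $\|\hat P - P_0\|_\infty$ and $\|\nabla P_0\|_1 + \|\nabla P_0^\top\|_1$. Since $P_0 \in [0,1]^{n\times n}$, clipping the PGFL output to $[0,1]$ (which can only improve the MSE against $P_0$) gives $\|\hat P - P_0\|_\infty \le 1$, so the middle term of Theorem~\ref{thm:general} is of order $q^2 \log n / n$ and supplies the first summand of the corollary. Everything else reduces to bounding the total variation of $P_0$ on the C2-power graph of $G_K$.

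The key structural step is a uniform bound on the $\xi$-rank spread of edges in $G_K$: with probability tending to one, every edge $(i,j) \in E(G_K)$ satisfies $|\xi_i - \xi_j| = O(K/n + \Delta_n)$, outside an $O(K + n\Delta_n)$-sized set of vertices lying near the partition boundaries of Assumption~\ref{as:hat_upper}. For an interior $i \in A_l$, the $K$ nearest $\xi$-neighbors all lie in $A_l$ at $\xi$-distance $O(K/n)$ by uniform-order-statistic concentration; Assumption~\ref{as:hat_upper} then forces the $K$-NN radius $r_K(i)$ in $\hat d^2$ to be $O(K/n) + \Delta_n$, and Assumption~\ref{as:hat_lower} converts this back to $|\xi_i - \xi_j| \le (r_K(i) + \Delta_n)/L_1 = O(K/n + \Delta_n)$ for every $\hat d$-KNN $j$ of $i$. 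The assumption $K/\log n \to \infty$ is exactly what allows a union bound over all $n$ vertices in the order-statistic step.

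Given this spread control, a telescoping/coverage argument finishes the TV bound. Sort the vertices by $\xi$ and set $\delta_l^{(k)} := f(\xi_{(l)},\xi_k) - f(\xi_{(l+1)},\xi_k)$; for each edge $(i,j) \in E(G_K)$, bound $|P_{0,i,k} - P_{0,j,k}|$ by the telescoping sum of $|\delta_l^{(k)}|$ over the intermediate ranks. Swapping the order of summation yields, for each column $k$,
\begin{equation*}
\sum_{(i,j) \in E(G_K)} |P_{0,i,k} - P_{0,j,k}| \;\le\; \sum_l c_l\,|\delta_l^{(k)}| \;\le\; \bigl(\max_l c_l\bigr) \cdot B,
\end{equation*}
where $c_l$ counts edges whose endpoints straddle rank-cut $l$ and the last step invokes Assumption~\ref{as:bv}. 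Since each edge has rank-spread at most $D = O(K + n\Delta_n)$, only vertices within rank-distance $D$ of the cut can contribute, so $c_l \le KD = O(K^2 + nK\Delta_n)$; summing over $k$ and symmetrically for $\nabla P_0^\top$ gives $\|\nabla P_0\|_1 + \|\nabla P_0^\top\|_1 = O(nK^2 + n^2 K \Delta_n)$. Plugging into Theorem~\ref{thm:general} with $\lambda$ chosen as in that theorem yields the stated rate, after checking that the $O(K + n\Delta_n)$ boundary vertices contribute only lower-order terms thanks to $K/n \to 0$.

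The main obstacle is the piecewise aspect of Assumption~\ref{as:hat_upper}: a vertex near a boundary of the partition $\{A_l\}$ could in principle have $\hat d$-KNNs at large $\xi$-distance inside a neighboring piece, and such edges must be accounted for separately without spoiling the leading-order TV bound. The other delicate ingredient is the uniform concentration of the $K$-th order statistic of $n-1$ iid uniforms across all $n$ vertices, which is precisely what the $K/\log n \to \infty$ scaling is designed to provide.
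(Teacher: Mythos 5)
Your proposal is correct and follows essentially the same route as the paper's proof: clip $\hat P$ to $[0,1]$ to kill the $\|\hat P - P_0\|_\infty$ term, use order-statistic concentration (enabled by $K/\log n \to \infty$) together with Assumptions \ref{as:hat_upper} and \ref{as:hat_lower} to show every KNN edge has $\xi$-spread $O(K/n + \Delta_n)$ away from an $O(K + n\Delta_n)$-sized boundary set, and then bound $\|\nabla P_0\|_1$ by telescoping along the sorted $\xi$'s with each increment counted $O(K(K + n\Delta_n))$ times via Assumption \ref{as:bv}. Your cut-counting quantity $c_l$ is exactly the paper's multiplicity bound $K \cdot \max_i |\tilde N_i|$ in different notation.
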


The MSE bound of Theorem \ref{cor:graphon} is dependent on the additive error of the metric, $\Delta_n$.
In the event that we find a metric with additive error $\Delta_n \asymp n^{-1}$, and the KNN graph is connected, then the KNN-PGFL can achieve near minimax rates (unfortunately, all known metrics have an error $\Delta_n \asymp n^{-1/2}$).
Instead of making our assumptions about $\hat d_1$, we make these assumptions about the population level version of the metric,
\begin{equation*}
d_1^2(i,j) := \int_0^1 \left| \int_0^1 \left( f(\xi_i,v) - f(\xi_j,v) \right) f(u,v) dv \right| du.
\end{equation*}
We now consider these assumptions placed on $d_1$ instead of $\hat d_1$.

\begin{corollary}
\label{cor:graphon_dhat}
Suppose that Assumptions \ref{as:hat_lower}, \ref{as:hat_upper} hold for $d_1$ with $\Delta_n = O_P(\sqrt{(\log n)/n})$ then they also hold for $\hat d_1$ with $\Delta_n \asymp \sqrt{(\log n)/n}$.
With these assumptions on $d_1$ and under the remaining conditions of Corollary \ref{cor:graphon} and assume $q = O_P(n^{1/4} \cdot {\rm plog~} n)$ we set $K = O({\rm plog~}n)$, the KNN-PGFL with metric $\hat d_1$ has MSE bound
\begin{equation*}
  \frac{1}{n^2} \| \hat P - P_0 \|_F^2 = O_P \left( \frac{{\rm plog~}n}{\sqrt n}\right).
\end{equation*}
\end{corollary}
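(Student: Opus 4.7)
The plan is to prove the corollary in two stages. First, establish that $\hat d_1$ concentrates uniformly around $d_1$ at rate $O_P(\sqrt{\log n/n})$, which suffices to transfer Assumptions \ref{as:hat_lower} and \ref{as:hat_upper} from $d_1$ to $\hat d_1$ with the additive error inflated by only $O_P(\sqrt{\log n/n})$. Second, substitute the resulting $\Delta_n$, the chosen $K$, and the assumed $q$ into Corollary \ref{cor:graphon} and collect polylog terms.

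For the uniform concentration, fix a pair $(i,j)$ and introduce
\[
M_k := \frac{(A_i - A_j)^\top A_k}{n}, \quad \tilde\mu_k := \mathbb{E}[M_k \mid \xi], \quad \mu_k := \int_0^1 (f(\xi_i,v) - f(\xi_j,v))\, f(\xi_k,v)\, dv,
\]
so that $\hat d_1^2(i,j) = (n-2)^{-1}\sum_{k \ne i,j} |M_k|$ and $d_1^2(i,j) = \mathbb{E}_{\xi_k}|\mu_k|$. I would bound $\hat d_1^2(i,j) - d_1^2(i,j)$ by telescoping through $(n-2)^{-1}\sum_k |\tilde\mu_k|$ and $(n-2)^{-1}\sum_k |\mu_k|$ and applying Hoeffding's inequality to each gap. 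The first gap uses that, conditional on $\xi$, $M_k - \tilde\mu_k$ is a scaled sum over $l$ of independent bounded summands, with the three diagonal terms $l \in \{i,j,k\}$ contributing only $O(1/n)$; the second treats $\tilde\mu_k - \mu_k$ as the deviation of a sample average over the iid $\xi_l$ from its integral; the third uses that $|\mu_k|$ is iid (over $k$) given $\xi_i,\xi_j$ with mean $d_1^2(i,j)$. A union bound over $O(n^3)$ index triples inflates each deviation to $O(\sqrt{\log n/n})$; passing to absolute values is free because $\bigl||x|-|y|\bigr| \le |x-y|$, and the triangle inequality yields the claimed uniform concentration.

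With this in hand, substituting $\hat d_1^2 \geq d_1^2 - O_P(\sqrt{\log n/n})$ (respectively $\leq d_1^2 + O_P(\sqrt{\log n/n})$) into the Assumption \ref{as:hat_lower} (respectively \ref{as:hat_upper}) bound for $d_1$ yields the corresponding bound for $\hat d_1$ with $\Delta_n \asymp \sqrt{\log n/n}$, proving the first claim. Plugging $\Delta_n \asymp \sqrt{\log n/n}$, $K = O({\rm plog~} n)$, and $q = O_P(n^{1/4}\,{\rm plog~} n)$ into the MSE bound from Corollary \ref{cor:graphon}, the three terms become
\[
\frac{q^2 \log n}{n} = O_P\!\left(\frac{{\rm plog~} n}{\sqrt n}\right), \quad \frac{K^2 \log n}{n} = O\!\left(\frac{{\rm plog~} n}{n}\right), \quad \frac{nK\Delta_n \log n}{n} = O_P\!\left(\frac{{\rm plog~} n}{\sqrt n}\right),
\]
which combine to the stated rate.

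The main obstacle is the concentration step, and more specifically the bookkeeping of which randomness is conditioned upon at each layer: first the Bernoulli $A_{i,l}$ given $\xi$, then the $\xi_l$ over the graphon inner integral, then the $\xi_k$ over the outer average. Each individual Hoeffding application is standard, but keeping the dependency structure clean---in particular accounting for small-index coincidences such as $l \in \{i,j,k\}$ and ensuring the union-bound overhead contributes only polylog factors---is where care is needed. Everything else is algebraic manipulation of the rate from Corollary \ref{cor:graphon}.
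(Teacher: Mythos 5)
Your proposal is correct and follows essentially the same route as the paper: a telescoping decomposition of $\hat d_1^2 - d_1^2$ through the conditional-on-$\xi$ and integral-level intermediate metrics, with Hoeffding plus union bounds at each layer (the paper outsources one layer to eq.~19 of the neighborhood-smoothing paper, which is itself the same Hoeffding argument), followed by direct substitution of $\Delta_n$, $K$, and $q$ into Corollary~\ref{cor:graphon}. Your explicit verification of the three resulting rate terms is correct and actually spells out a step the paper leaves implicit.
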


This result is consistent (up to logarithmic terms) with what was found in \cite{zhang2015estimating}, although under somewhat different conditions.
It is outside of the scope of this work to comprehensively study the construction of better metric, and we believe that a significant departure from $\hat d_1$ and $\hat d_\infty$ is needed.

\section{Experiments}

%\subsection{Numerical Experiments}

To test the empirical performance of KNN-PGFL, we simulate from five graphon models and evaluate the mean-square error of some important graphon estimators.
In addition to our own, four other methods were used for comparison, neighborhood smoothing (NS), \cite{zhang2015estimating}, sorting and smoothing (SAS), \cite{chan2014consistent}, the stochastic block model (SBM), \cite{airoldi2013stochastic}, and USVT, \cite{chatterjee2015matrix}. 
For each graphon function and each repetition, a graph with 1000 nodes was generated. 
NS, SAS, USVT and KNN-PGFL were applied to the same graph. 
The penalty parameter $\lambda$ in KNN-PGFL was chosen as $\lambda = 0.5$ for all graphon functions, and 2-Nearest Neighborhoods was used (this gave a well connected KNN graph with only a few connected components). 
The stopping criterion for KNN-PGFL was $\|P-Q^T\|_F< 0.01\|Q\|_F$, and the resulting $Q$ was used as the estimated probability matrix. 
For the SAS method, the bandwidth parameter $h$ was chosen as $10$.  
For SBM, at least two observed graphs are needed, so to make the comparison fair, for each repetition, 4 graphs with 500 nodes were generated according to the graphon function. 
SBM was applied to the four observed graphs, and the tuning parameter ($\Delta$ in their paper) was chosen by cross-validation. 
The MSEs were averaged over 30 repetitions, multiplied by $10^4$, are shown in Table~\ref{tab:sim_MSE}.

\begin{table}
  \caption{Mean-square error comparisons}
  \label{tab:sim_MSE}
  \centering
  \begin{tabular}{llllll}
    \toprule
    Method & Graphon A & Graphon B & Graphon C & Graphon D & Graphon E \\
    \midrule
    KNN-PGFL &7.39  &{\bf 3.10} &17.54 & {\bf 34.91} &{\bf 61.08} \\
    Neigh. Smooth&13.68 &9.55 &17.16&45.18 &66.76 \\
    SAS &{\bf 6.29}  & 9.20 &23.68&97.90  & 190.38 \\
    SBM&37.65  &6.60&35.77  &  44.45&62.68 \\
    USVT& 7.05 & 9.61&{\bf 12.24} & 50.34 &71.94 \\
    \bottomrule
  \end{tabular}
\end{table}

The performance of each estimator is bound to be highly dependent on the structure of the graphon (see Figure \ref{fig:graphons} for the graphons and their estimates).
Graphon A has monotonic node degrees, and is of low rank; as a result SAS and USVT perform well in this case, but KNN-PGFL works similarly to these as well.
Graphon B is a graph with blocks, and also a piecewise constant function; KNN-PGFL performs best, followed by SBM which is designed for this situation.
Graphon C is a smooth graphon function with local structure, and the best result is obtained by USVT, followed by NS and KNN-PGFL, but none of the methods are especially well suited to this graphon.
Graphon D and E are both piecewise constant graphon functions. Due to the lack of monotonicity here, SAS fails to recover the probability matrix.  KNN-PGFL gives the best MSE results, followed by SBM. 
%From the plot, PGFL is less noisy, but tends to blur the boundary parts. Stoch. Block shows smoother and more noisy results. 
For all five graphons, KNN-PGFL performs well and does not catastrophically fail, and in all but one case, it significantly outperforms other segmentation methods, SAS and SBM.

\begin{figure}
\hspace{-20mm}
\begin{center}
  \includegraphics[width=.9\textwidth]{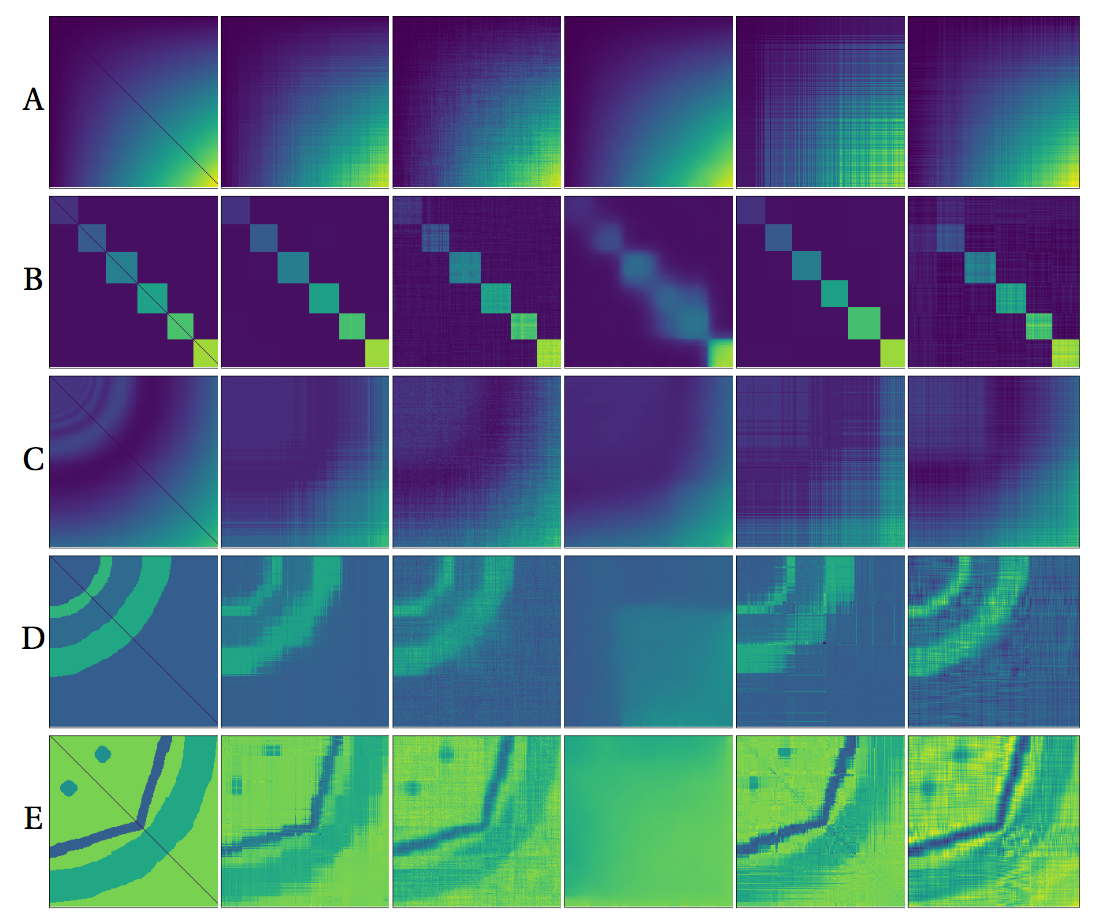}
\end{center}
  \caption{Graphons A, B, C, D and E are shown respectively. For each row, from the left to the right, the plots are the true probability matrix, the estimates using PGFL, NS, SAS, SBM, and USVT.}
  \label{fig:graphons}
\end{figure}

\section{Conclusion}

We proposed the power graph fused lasso for denoising a matrix with a known graph over the rows and columns. 
Our main theorem, \ref{thm:general}, demonstrates that it achieves the same mean-square error guarantee as 2D total variation denoising under a subGaussian error model.
We proposed its use for graphon estimation with the K-nearest neighbors graph, and studied its performance both theoretically and empirically.
We find that it is empirically competitive with existing methods and significantly outperforms the other graphon segmentation methods, SAS and SBM.
Theoretically and experimentally, we find that the performance of KNN-PGFL is limited by the quality of the distance metric $\hat d_1$, due to the additive error characterized in Assumptions \ref{as:hat_lower}, \ref{as:hat_upper} (a similar problem shared by neighborhood smoothing, \cite{zhang2015estimating}).
We hope that future work can discover better vertex metrics for graphon models that can be used in conjunction with the proposed methodology.

\subsection*{Acknowledgements}

JS is partially supported by NSF DMS-1712996.

\bibliographystyle{apalike}
\bibliography{PGFL}

\newpage
\section{Appendix for ``Distributed Cartesian Power Graph Segmentation for Graphon Estimation''}

\begin{proof}[Proof of Theorem \ref{thm:general}.]
	We will follow a standard derivation of MSE bound for penalized estimation with minor modifications.
	See, for example, \cite{wang2016trend} for many of these tools.
	Let  us also  denote  by  $G_{l} \,=\,(E_l,V_l)$,  $l \in [q]$  the connected components of $G$. 
	We also write  $\nabla_l$ for the incidence  matrix corresponding to $G_l$, and  $n_l  \,=\, \vert E_l \vert$.
	
	Throughout  for  a matrix  $X \in \mathbb{R}^{s 
		\times t}$.  If  $S \subset [s]$, $T \subset [t]$  we denote by $X_{S,T} $ the matrix in $\mathbb{R}^{ \vert S\vert \times  \vert T\vert }$   such that  $(X_{S,T})_{i,j} := X_{S(i),T(j)}$  for  $i \in [\vert S\vert ]$, $j \in [\vert T\vert ]$.

	Now, back to the proof, we recall the basic inequality,
	\begin{equation}
	\label{eqn:basic_inequality}
	% \lambda \left( \rho_l (P_0) - \rho_l (\hat P) \right).
	\begin{array}{lll}
	\| \hat P - P_0 \|_F^2 &\le &2 \displaystyle \sum_{l^{\prime}=1}^{q}  \sum_{l=1}^{q}  \left| \langle R_{V_{l^{\prime}} , V_l }, (\hat P - P_0)_{V_{l^{\prime}} , V_l} \rangle_F \right| +   \\
	& &  \lambda \displaystyle\sum_{l^{\prime}=1}^{q} \sum_{ i \in V_ l^{\prime}}   \sum_{l=1}^{q} \left[ \| \nabla_{l} (P_0)_{i,\cdot} \|_1 +  \| \nabla_{l} (P_0)_{\cdot,i} \|_1  \right]     - \lambda \displaystyle \sum_{l^{\prime}=1}^{q} \sum_{ i \in V_{ l^{\prime} }}   \sum_{l=1}^{q}  \left[ \| \nabla_{l} \hat{P}_{i,\cdot} \|_1 +  \| \nabla_{l} \hat{P}_{\cdot,i} \|_1  \right]
	\end{array}
	\end{equation}
	
	Consider now  running depth first search (DFS) on $G_l$, and let $j_1^{l},\ldots,j_{n_l}^{l}$ be the ordering of $V_l$ such that $j_t^{l}$ is the $t$th vertex that the DFS visits (let the DFS start from an arbitrary node).
	Let $\nabla_{1D,l}$ denote the edge incidence matrix for the 1D chain graph that connects $j_t^l$ to $j_{t+1}^l$ for $t = 1,\ldots, n_l-1$. 
	Then by Lemma 1 in \cite{padilla2016dfs},  for any $l,l^{\prime} \in [q] $  we have that
	\begin{equation}
	\label{eq:DFSlemma}
	\sum_{i  \in V_{l^{\prime}} } \| \nabla_{1D,l} P_{i,\cdot} \|_1 \le 2 \sum_{i\in V_{l^{\prime}} } \| \nabla_l P_{i,\cdot} \|_1. %= 2 \rho_l(P). 
	\end{equation}
	And we notice that
	\begin{equation}
	\label{eqn:dfs_trick}
	\sum_{i  \in V_{l^{\prime}} } \| \nabla_{1D,l} P_{i,\cdot} \|_1   +     	 \sum_{i  \in V_{l} } \| \nabla_{1D,l^{\prime}} P_{\cdot,i} \|_1 := \| \nabla_{2D,l^{\prime},l} P^{l^{\prime},l} \|_1   
	\end{equation}
	where  the latter is the total variation of  $P^{l^{\prime},l}  = \text{vec} (P_{V_{l^{\prime}} \times V_l  })$, along an appropriately constructed 2D grid graph of size $n_{l^{\prime}} \times n_l$. Here,  $\rm vec$   denotes the vectorization. 
	
	Now for  $l, l^{\prime} \in [q]$, % we extend $V_{l^{\prime}} \times V_l$   to  be in $\mathbb{R}^{  n \times  n }$  by setting   
	we define for  $x \in \mathbb{R}^{n \times n}$    the matrix   $S_{l^{\prime},l}(x) \in \mathbb{R}^{n \times n}$ as
	\[
	S_{l^{\prime},l}(x)_{i,j}  =  1\,\,\,\,\,\text{if } \,\, (i,j) \in V_{l^{\prime}} \times V_l,  \,\,\,\,\,\,S_{l^{\prime},l}(x)_{i,j}   =0\,\,\,\,\,\text{otherwise.}   
	\] 
	Let us consider the ordering $\tau$ obtained by concatenating the DFS orderings associated with  the $G_l$  graphs, see above.  Let   $\nabla_{2D}$  be the  incidence operator associated with  the  $n \times n$ 2D grid graph using  such ordering $\tau$. We also  write  $\Pi$ for the orthogonal projection on the span of $1_{n^{2}}$.
	Then by (\ref{eq:DFSlemma})  and (\ref{eqn:dfs_trick}),  Cauchy–Schwarz inequality, and H\"{o}lder inequality,
	\[
	\begin{array}{lll}
	\left| \langle R_{V_{l^{\prime}} , V_l }, (\hat P - P_0)_{V_{l^{\prime}} , V_l} \rangle_F \right| &=& \left| \langle  S_{l^{\prime},l}(R),  S_{l^{\prime},l}(\hat P - P_0)\rangle_F \right| \\
	& \le & \vert \langle \Pi \text{vec} \left(  S_{l^{\prime},l}(R)  \right), \text{vec} (S_{l^{\prime},l}(\hat P - P_0))   \rangle_F \vert  +  \\
	& &\vert \langle (\nabla_{2D}^+)^T \text{vec}\left(S_{l,l^{\prime}}(R) \right) ,  \nabla_{2D}\text{vec} (S_{l,l^{\prime}}(\hat P - P_0)) \rangle_F \vert \\ 
	&\le &  \| \Pi \text{vec} \left(  S_{l^{\prime},l}(R)  \right)\|_2\,\|(\hat P - P_0)_{V_{l^{\prime}},V_l}\|_2 \,+\,\\
	& & \| (\nabla_{2D}^+)^T \text{vec}\left(S_{l^{\prime},l}(R) \right)\|_{\infty} \|\nabla_{2D}\text{vec} (S_{l^{\prime},l}(\hat P - P_0))\|_1\\
	& \le&   \| \Pi \text{vec} \left(  S_{l^{\prime},l}(R)  \right)\|_2\,\|(\hat P - P_0)_{V_{l^{\prime}},V_l}\|_2 \,+\,\\
	& &2 \| (\nabla_{2D}^+)^T \text{vec}\left(S_{l^{\prime},l}(R) \right)\|_{\infty}\,\big[\displaystyle \sum_{i\in V_{l^{\prime}} } \| \nabla_l (\hat P - P_0)_{i,\cdot} \|_1 +\\
	& &\displaystyle \sum_{i\in V_{l} } \| \nabla_{l^{\prime}} (\hat P - P_0)_{\cdot,i} \|_1 +   2n\| \hat P - P_0\|_{\infty}   \Big].
	\end{array}
	\]
	On the other hand,  because the entries of $S_{l^{\prime},l}(R)$ are iid subGaussian (aside from those that were set to $0$),  for any $u>0$,
	\begin{equation*}
	\underset{l,l^{\prime} \in [q]}{\max }\,\,   \| \Pi \text{vec} \left(  S_{l^{\prime},l}(R)  \right)\|_2 \le   2\sigma\sqrt{2 \log(e q^2/u) }, \,\,\,\,\,\,\,\,  %\underset{l \in [q]}{\max }\,\, \| (\nabla_{2D,l}^+)^T r^l\|_{\infty} \le  \sigma\,\underset{l \in [q]}{\max} \max_j \| (\nabla_{2D,l}^+)_j \|_2\,\,\sqrt{2 \log\left( \frac{2\,e\, q\,n^2   }{u} \right) },
	\end{equation*}
	and 
	\begin{equation*}
	\underset{l,l^{\prime} \in [q]}{\max }\,\,  	2 \| (\nabla_{2D}^+)^T \text{vec}\left(S_{l^{\prime},l}(R) \right)\|_{\infty}  \le  2\sigma\,\max_j \| (\nabla_{2D}^+)_{\cdot,j} \|_2\,\,\sqrt{2 \log\left( \frac{2\,e\, q^2\,n^2   }{u} \right) },
	\end{equation*}
	with probability  at least  $1-2u$.  
	
	Moreover,  by by Prop. 1 of \cite{hutter2016optimal} there exists a positive constant $C > 0$  such that
	\begin{equation*}
	\max_j \| (\nabla_{2D}^+)_{\cdot,j} \|_2 \le  C\sqrt{\log n}.
	\end{equation*}
	Therefore,  with probability at least $1-2u$, 
	\begin{equation*}
	\begin{array}{lll}
	\frac{1}{2} \| \hat P - P_0 \|_F^2 &\le & \displaystyle \sum_{l^{\prime}=1}^{q}  \sum_{l=1}^{q} \left[  4\sigma\sqrt{2 \log(e q^2/u) }\|(\hat P - P_0)_{V_{l^{\prime}},V_l}\|_2   - \frac{1}{2}\|(\hat P - P_0)_{V_{l^{\prime}},V_l}\|_2^2  \right]\\
	& & 4 \sigma C\,\sqrt{2 \log n \log\left(    \frac{2\,e\, q^2\,n^2   }{u} \right) } \displaystyle \sum_{l^{\prime}=1}^{q}  \sum_{l=1}^{q} \Big[ \displaystyle \sum_{i\in V_{l^{\prime}} } \| \nabla_l \hat P_{i,\cdot} \|_1 +   \sum_{i\in V_{l} } \| \nabla_{l^{\prime}} \hat P_{\cdot,i} \|_1 \\
	& &  \displaystyle \sum_{i\in V_{l^{\prime}} } \| \nabla_l  (P_0)_{i,\cdot} \|_1 +   \sum_{i\in V_{l} } \| \nabla_{l^{\prime}}  (P_0)_{\cdot,i} \|_1  +   2n\| \hat P - P_0\|_{\infty}  \Big] + 
	\\
	&& \lambda \displaystyle\sum_{l^{\prime}=1}^{q} \sum_{ i \in V_{l^{\prime}}}   \sum_{l=1}^{q} \left[ \| \nabla_{l} (P_0)_{i,\cdot} \|_1 +  \| \nabla_{l} (P_0)_{\cdot,i} \|_1  \right]     - \lambda \displaystyle \sum_{l^{\prime}=1}^{q} \sum_{ i \in V_{l^{\prime}}}   \sum_{l=1}^{q}  \left[ \| \nabla_{l} \hat{P}_{i,\cdot} \|_1 +  \| \nabla_{l} \hat{P}_{\cdot,i} \|_1  \right]  
	%  & &
	\end{array}
	\end{equation*}
	and so by the   inequality $2xa - a^2 \le x^2$, and  choosing $\lambda$ as
	\[
	\lambda  =  4 \sigma C\,\sqrt{2 \log n \log\left(    \frac{2\,e\, q^2\,n^2   }{u} \right) },
	\]
	we obtain
	\begin{equation*}
	\begin{array}{lll}
	\frac{1}{2} \| \hat P - P_0 \|_F^2 & \le &  8q^2 \sigma^2 \log(e q^2/u) +  \left[8\sigma C\,\sqrt{2 \log n \log\left(    \frac{2\,e\, q^2\,n^2   }{u} \right) } \right] nq^2\| \hat P - P_0\|_{\infty}   + \\
	& & 8\sigma C\,\sqrt{2 \log n \log\left(    \frac{2\,e\, q^2\,n^2   }{u} \right) } \displaystyle\sum_{l^{\prime}=1}^{q} \sum_{ i \in V_{l^{\prime}}}   \sum_{l=1}^{q} \left[ \| \nabla_{l} (P_0)_{i,\cdot} \|_1 +  \| \nabla_{l} (P_0)_{\cdot,i} \|_1  \right]  
	%	content..
	\end{array}
	\end{equation*}
	with probability at least $1-2u$.

\end{proof}

\begin{proof}[Proof of Corollary \ref{cor:graphon}.]
	First,   we observe  that  $\hat{P}_{i,j} \in [0,1]$  for all $i,j \in [n]$. If not, then  both the  loss and the objective in the definition of $\hat P$  can be improve  by setting 
	$\tilde{P} \in \mathbb{R}^{n \times n}$  as
	\[
	\tilde{P}_{i,j} =   \begin{cases}
	1  & \text{if }\,\,\,\,\,\hat P_{i,j} > 1\\
	0  & \text{if }\,\,\,\,\,\hat  P_{i,j} <0\\
	\hat   P_{i,j}   &  \text{otherwise.} 
	\end{cases}
	\]
	Hence,  $\|  \hat P - P\|_{\infty}   \leq  1 $.
	
	Next,  let  $S  \,=\,\{a_1,\ldots,a_{m-1}  \}$ and $I(\delta)  \,=\,  \{ i\,:\,     \vert \xi_i -  x\vert > \delta,\,\,\,\,    \forall  x \in \mathcal{S}  \}$  for some  $\delta >0$.
	
	Let $p_{\min} > \delta > 0$ and $B_\delta(u) = [u - \delta,u+\delta]$ and 
	\begin{equation*}
	\tilde B(i,\delta) = \{ j \ne i : \xi_j \in B_\delta(\xi_i) \}.
	\end{equation*}
	Notice that the $\tilde B(i,\delta)$ have the same distribution for  all $i$.
	By Proposition 27 from \cite{von2010hitting},
	\begin{equation*}
	%\tilde B(i,\delta) = \{ j \ne i : \xi_j \in B_\delta(\xi_i) \cap A_{l(\xi_i)} \}.
	\end{equation*}
	\begin{align*}
	&\mathbb P\left\{ |\tilde B(1,\delta)| \le \frac{\delta n}{4} \right\} = \int_0^1 \mathbb P \left( \left\vert \left\{ j > 2 : \xi_j \in B_\delta(x)  \right\} \right\vert \le \frac{\delta n}{4} \right) dx \\
	&\quad \le \int_0^1 \mathbb P \left( \left\vert \left\{ j > 2 : \xi_j \in B_\delta(x) \cap A_{l(x)} \right\} \right\vert \le \frac{n-1}{2} {\rm Vol}(B_\delta(x) ) \right) dx \\
	&\quad \le \int_0^1 \exp \left( - \frac{n {\rm Vol}(B_\delta(x) )}{24} \right) dx\\
	%	&\quad \le \int_0^1 \exp \left( - \frac{n {\rm Vol}(B_\delta(x))}{48} \right) dx\\
	&\quad = \exp \left( - \frac{n\delta}{12} \right).
	\end{align*}
	Let us consider the event 
	\begin{equation*}
	\Omega(\delta) = \bigcap_{i=1}^n \left\{ |\tilde B(i,\delta)| \ge \frac{\delta n}{4} \right\}.
	\end{equation*}
	Set $\delta = 4 K / n + \Delta_n$.  By the union bound $\mathbb P \Omega(\delta) \rightarrow 1$ if $\delta n /\log n \rightarrow \infty$ (which is satisfied under our assumptions for $K$) so henceforth assume $\Omega(\delta)$.
	Let $\xi_{i'} \in (\xi_i - \delta, \xi_i + \delta)$,  with $i \in I(\delta)$ then by Assumption \ref{as:hat_upper}, 
	\begin{equation*}
	\label{eq:dupperNN}
	\hat d(i,i') \le L_2 |\xi_i - \xi_{i'}| + \Delta_n \le \frac{4 L_2 K}{n}  +  (L_2 +1)\Delta_n,
	\end{equation*}
	and notice that on $\Omega(\delta)$ there are at least $K$ such vertices $i'$.  
	
	On the other hand, let  $C>0$   such that 
	\[
	\delta'  :=  C\left(   \frac{K}{n}   + \Delta_n \right)   >  \frac{1}{L_1}\left( \frac{4 L_2 K}{n}  + (L_2+2)\Delta_n   \right).
	\]
	Hence, by Assumption \ref{as:hat_lower}, if  $i \in I(\delta')$, then  $\vert \xi_{i'}  - \xi_i \vert > \delta'  $  implies  $i'$  is not among the KNN of $i$. 
	
	Without loss of generality, suppose that $\xi_1 \le \xi_2 \le \ldots \le \xi_n$ (we can always reorder $\xi$), and let $N_i(K)$ be the KNN of $i$.
	Let $\tilde N_i  =\{  j   \neq i \,:\, \vert \xi_{i'}  - \xi_i \vert \leq  \delta'     \}  $, and notice that $\max_i |\tilde N_i| = O_P((K + n \Delta_n))$, this follows again  by Proposition 27 from \cite{von2010hitting}. Similarly, $\vert [n] \backslash  I(\delta')\vert    =  O_P( K + n \Delta_n )$.
	
	Therefore,
	\begin{align*}
	\| \nabla P_0 \|_1 &= \sum_{j,i} \sum_{i' \in N_i(K)} |f(\xi_i,\xi_j) - f(\xi_{i'},\xi_j)| \\ 
	& \le \sum_{j \in [n],i \in  I(\delta') } \sum_{i' \in N_i(K)} \sum_{k = i \wedge i'}^{i \vee i' - 1} |f(\xi_k,\xi_j) - f(\xi_{k+1},\xi_j)|    \,\, +\,\,  n\,K\, \vert [n]  \backslash   I(\delta')\vert  \\ 
	& \le \sum_{j,k} K \cdot (\max_i |\tilde N_i|) \cdot |f(\xi_k,\xi_j) - f(\xi_{k+1},\xi_j)|  +  n\,K\, \vert [n]  \backslash   I(\delta')\vert\\
	& = O_P \left( n (K^2 + n K \Delta_n)\right),
	\end{align*}
	by Assumption \ref{as:bv}.
	The above display follows from the fact that the term $|f(\xi_k,\xi_j) - f(\xi_{k+1},\xi_j)|$ appears at most $K \cdot (\max_i |\tilde N_i|)$ times. 
	The same holds for $\| \nabla P_0^\top \|_1$.
\end{proof}

\begin{proof}[Proof of Corollary \ref{cor:graphon_dhat}]
Consider the random variable $Z = (A_{i,l} - A_{j,l}) A_{k,l}$ then we have that $|Z| \le 1$ and so by Hoeffding's inequality,
\begin{equation*}
\mathbb P\{ |(A_i - A_j)^\top A_k - (P_i - P_j)^\top P_k| \ge \sqrt n u | \xi \} \le 2 \exp (-2 u^2),
\end{equation*}
where conditional on $\xi$ means that we fix the latent parameters and draw the matrix $A$.

By eq. 19 in \cite{zhang2015estimating}, with probability at least $1 - 2 n^{-\gamma / 4}$,
\begin{equation*}
\max_{i \ne j} | (A^2/n)_{i,j} - (P^2/n)_{i,j} | \le \left( \frac{(C + \gamma) \log n}{n} \right)^{\frac 12},
\end{equation*}
for some  positive constant $C$.

Next define %the conditional expectation,
\begin{equation*}
\tilde d^2(i,j) : = \frac{1}{n(n-2)} \sum_{k\ne i,j} |(P_i - P_j)^\top P_k |.   %\mathbb E \left[ \hat d_1^2(i,j) | \xi \right]
\end{equation*}
Also, $(P_i - P_j)^\top P_k = (P^2)_{i,k} - (P^2)_{j,k}$.
Hence, 
\begin{multline*}
\max_{i,j} \left| \hat d^2_1(i,j) - \tilde d^2_1(i,j)\right| \le \max_{i,j,k:k \ne i,j} | (A^2/n - P^2/n)_{i,k} - (A^2/n - P^2/n)_{j,k} | \\
\le 2 \max_{i,k: k \ne i} | (A^2/n)_{i,k} - (P^2/n)_{i,k} |.
\end{multline*}
Hence, we have that 
\begin{equation*}
\max_{i,j} \left| \hat d_1^2(i,j) - \tilde d_1^2(i,j)\right| = O_P \left( \sqrt{\frac{\log n}{n}} \right). 
\end{equation*}
Define 
\begin{equation*}
{d'}_1^2(i,j) := \frac{1}{n-2} \sum_{k \ne i,j} \left| \int (f(\xi_i , u) - f(\xi_j, u)) f(\xi_k ,u) du \right| 
\end{equation*}
Similarly to the above considerations, $n |\tilde d^2_1(i,j) - {d'}^2_1(i,j)|$ is bounded by
\begin{multline*}
\frac{1}{n-2} \sum_{k \ne i,j} \left| \sum_l \left( (f(\xi_i,\xi_l) - f(\xi_i,\xi_l)) f(\xi_k,\xi_l) - \int (f(\xi_i,u) - f(\xi_j,u)) f(\xi_k,u) du \right) \right| \\ 
\le 2 \max_{i \ne k} \left| \sum_l \left( f(\xi_i,\xi_l) f(\xi_k,\xi_l) - \int f(\xi_i,u) f(\xi_k,u) du \right) \right| \\ 
\le 2 \max_{i \ne k} \left|f(\xi_i,\xi_k) (f(\xi_i,\xi_i) + f(\xi_k,\xi_k)) - 2\int f(\xi_i,u) f(\xi_k,u) du \right| \\
+ 2 \max_{i \ne k} \left| \sum_{l \ne i,k} \left( f(\xi_i,\xi_l) f(\xi_k,\xi_l) - \int f(\xi_i,u) f(\xi_k,u) du \right) \right|.
\end{multline*}
The first term is bounded by a constant and we can control the second term by Hoeffding's inequality,
\begin{equation*}
\mathbb P \left\{ \left| \sum_{l \ne i,k} \left( f(\xi_i,\xi_l) f(\xi_k,\xi_l) - \int f(\xi_i,u) f(\xi_k,u) du \right) \right| \ge u \sqrt{n-2} \right\} \le 2 \exp ( - 2 u^2).
\end{equation*}
Hence,
\begin{equation*}
\max_{i,j} |\tilde d^2_1(i,j) - {d'}^2_1(i,j)| = O_P\left( \sqrt{\frac{\log n}{n}} \right).
\end{equation*}
Furthermore, by Hoeffding's inequality,
\begin{equation*}
\mathbb P \left\{ | {d'}^2_1(i,j) - \mathbb E[ {d'}^2_1(i,j) |\xi_i,\xi_j] | \ge u/\sqrt{n-2} \right \} \le 2 e^{-2 u^2}.
\end{equation*}
By definition, $ \mathbb E[ {d'}^2_1(i,j) |\xi_i,\xi_j] = d_1^2(i,j)$, hence,
\begin{equation*}
\max_{i,j} | {d'}_1^2(i,j) - d_1^2(i,j) | = O_P\left(\sqrt{\frac{\log n}{n}} \right).
\end{equation*}
These combined give us that 
\begin{equation*}
\max_{i,j} |\hat d^2_1(i,j) - d^2_1(i,j)| \le \Delta_n
\end{equation*}
for some sequence $\Delta_n \asymp \sqrt{(\log n)/n}$.

Hence, if $d_1$ satisfies Assumptions \ref{as:hat_lower}, \ref{as:hat_upper} with $\Delta_n \asymp \sqrt{(\log n) / n}$ then $\hat d_1$ does as well with high probability. 
\end{proof}

\subsection{Projected Newton}

For Distributed ADMM Algorithm, the update of $P$ and $Q$ is implemented with projected Newton on each column separately. The general projected Newton method is not guaranteed to converge. But for some special cases, the projected Newton method converges. In our case, the dual form of \eqref{eq:FL} has the box constraint problem and is one of this kind of problems.
Dual problem for the fused lasso problem is 
\begin{equation}
\label{equ:dual}
\min_{u}\frac{1}{2}\|\nabla^Tu\|_2^2 - u^T\nabla y, \ \ \ \ s.t. ||u||_\infty \leq \frac{\lambda}{2}
\end{equation}
where $y$ is the column of $P$ or $Q$ in Section 2.1. Then the primal solution is $y - \nabla^Tu$.  For reproducibility purposes we include the projected Newton algorithm that we employ.

\begin{algorithm}
\caption{Projected Newton}
\begin{algorithmic}[1]
  \State Solve $\nabla\nabla^Tu = \nabla y$ by Algorithm Laplacian solver for a graph.
  \If{$||u||_\infty \leq \frac{\lambda}{2}$},{ return $u$.} \EndIf
  \State $u = \textnormal{proj}(u)$.
  \State Let $\Delta = \frac{\lambda}{2}||\nabla y||_1 - u^T\nabla y$, the duality gap,
  \While{$\Delta$ is large}
  \State Active set $I = \{i:u_i = - \frac{\lambda}{2}\ \ \  \textnormal{and}\ \ \  (\nabla\nabla^Tu - \nabla y)_i > 0 \ \ \ \textnormal{or} \ \ \  u_i = \frac{\lambda}{2} \ \ \ \textnormal{and}\ \ \  (\nabla\nabla^Tu - \nabla y)_i < 0\}$,
  \State $\bar{I} = [n]/I$,
  \State Solve $(\nabla_{\bar{I}}\nabla_{\bar{I}}^T)s = (\nabla_{\bar{I}}\nabla^Tu - \nabla_{\bar{I}}y)$,
  \State Update $u_{\bar{I}} = \textnormal{proj}(u_{\bar{I}} - \alpha s_{\bar{I}})$, $\alpha$ is chosen with backtracking line search.
  \EndWhile
  \State return $u$.
\end{algorithmic}
\end{algorithm}

\begin{algorithm}
\caption{Laplacian solver for a graph}
\begin{algorithmic}[1]
  \State For each connected component \textit{C} of graph $G$ with incidence matrix $\nabla$, solve the Laplacian linear system restricted on \textit{C}. That is, solve $\nabla_\textit{C}^T\nabla_\textit{C} z_\textit{C} = y - P_{\textnormal{null}(\nabla_\textit{C})}(y)$, where $\nabla_\textit{C}$ is the incidence matrix restricted on \textit{C}, and $P_{\textnormal{null}(\nabla_\textit{C})}(y)$ is the projection of $y$ on the null space of $\nabla_\textit{C}$,
  \State return $\nabla z = (\nabla (z_\textit{C}) \textrm{ for all connected comp. }C \textrm{ of } G)$
 \end{algorithmic}
\end{algorithm}

\end{document}